\newcommand{\argmin}{\mathop{ \arg\!\min}}
\renewcommand{\vec}[1]{\mathbf{#1}}
\def\lt{\left}
\def\rt{\right}
\def\x{\vec{x}}
\def\y{\vec{y}}
\def\w{\vec{w}}
\def\R{{\cal R}}
\def\bz{\mathcal Z}
\def\bp{\mathbf{p}}
\def\bone{\mathbf{1}}
\def\bzero{\mathbf{0}}
\def\sgn{\hbox{sign}}
\def\eg{\emph{e.g.}} 
\def\ie{\emph{i.e.}}
\def\st{\text{s.t.}}
\def\R{\mathbb{R}}
\def\I{\mathcal{I}}
\def\gl{\lambda}
\def\R{\mathbb{R}}
\newtheorem{theorem}{Theorem}
\newtheorem{lemma}{Lemma}
\def\lauc{L_\text{AUC}}
\def\comment#1{}
\title{A Univariate Bound of Area Under ROC}
\author{Siwei Lyu $~~~~~~~~~~~~~~~~~~~$ Yiming Ying \\ Computer Science Department $~~$ Mathematics Department \\University at Albany, State University at New York, USA \\ \{slyu,yying\}@albany.edu}
\begin{document}
\maketitle

\begin{abstract}
Area under ROC (AUC) is an important metric for binary classification and bipartite ranking problems. However, it is difficult to directly optimize AUC as a learning objective, so most existing algorithms are based on optimizing a surrogate loss to AUC. One significant drawback of these surrogate losses is that they require pairwise comparisons among training data, which leads to slow running time and increasing local storage for online learning. In this work, we describe a new surrogate loss based on a reformulation of AUC risk, which does not require pairwise comparison but rankings of the predictions. We further show that the ranking operation can be avoided, and the learning objective obtained based on this surrogate enjoys linear complexity in time and storage. We perform experiments to demonstrate the effectiveness of the online and batch algorithms for AUC optimization based on the proposed surrogate loss. 
\end{abstract}

%%%%%%%%%%%%%%%%%%%%%%%  introduction %%%%%%%%%%%%%%%%%%%%%%%%%%%%
\section{Introduction}

The {\em area under receiver operating characteristics curves} (AUC) is a useful quantitative metric for assessing the performance of binary classification and bipartite ranking algorithms \cite{Cortes:2003cv,W.-Kotlowski:2011ei}. However, there are two factors make AUC difficult to be used directly as a learning objective to train classification or ranking algorithms. The foremost is due to the discontinuous indicator function in the definition of the AUC (c.f. Eq.\eqref{e:auc}), which makes direct minimization of the AUC in general an NP-hard problem \cite{hand_till_ml01}.   As such, most existing AUC learning algorithm replace the indicator function with surrogates that are continuous and convex upper-bounds of the AUC.  The second issue with the AUC is the requirement of pairwise comparison between all positive and negative examples in training data. This leads to algorithms with a running time complexity that is quadratic in the number of training data, and a space complexity that is linear of the training data. For batch algorithms, this means slow running time as we need to compare all pairs of positive/negative examples, and for online learning, this means ever increasing local storage as we need to store all previously seen data for the pairwise comparisons. Both are undesirable when applying these algorithms to large-scale datasets. 

In this work, we describe a new surrogate loss to AUC that has a linear time complexity and constant space complexity. This new loss is based on an equivalent formulation of AUC based on ranking the prediction scores, which obviates pairwise comparisons. We further show that the ranking operation can be replaced with an equivalent optimization problem, and the learning objective affords a simple form that has a bounding relation with AUC. Furthermore,  we show that the new loss has a close relation with the SVM learning objective, which sheds light on the previous observations of the effectiveness of the SVM on optimizing AUC \cite{Brefeld:2005dd,Joachims:2005rv,Caruana:2004nr,Rakotomamonjy:2004oc}.  The new surrogate loss leads naturally to an online AUC optimization method with simple (projected) stochastic sub-gradient steps.  Experimental evaluations on several standard benchmark datasets show that learning objective formed from this new loss achieves performance in par with other widely used AUC surrogates, with a significant reduction in running time and storage requirement. %Our method can extend to models that are not convex, \eg, deep learning. 

%%%%%%
\section{Definitions}

To facilitate subsequent description, we first review the definition of AUC in the context of binary classification. Assume we are given a set of data $\{(\x_i, y_i)\}_{i=1}^N$, with $y_i \in \{-1, +1\}$ and $\x_i \in \R^d$. We denote $\I^+ = \{i|y_i = +1\}$ and $\I^- = \{i|y_i = -1\}$ as the sets of indices of positive and negative examples, respectively, with $N^+ = |\I^+|$ and $N^- = |\I^-|$, and $N^+ + N^- = N$. Define $\mathbf{I}$ as the {\em indicator function}: $\mathbf{I}_a = 1$ if $a$ is true and 0 otherwise.  A parametric binary classifier $c_{\w,\theta}: \R^d \mapsto \{-1,+1\}$, constructed as
~\vspace{-.4em} \[ 
c_{\w,\theta}(\x) = 2\mathbf{I}_{f_\w(\x) \ge \theta}-1 = \sgn(f_\w(\x) - \theta),
\]
maps an example to the class label, where $f_\w: \R^d \mapsto \R$ (with $\w \in \R^m$ being the parameter) is the prediction function and $\theta \in \R$ is the classification threshold. We denote $c_i = f_\w(\x_i)$ as the {\em prediction score} of the i$^\text{th}$ example ($i = 1,\cdots,N$). For simplicity, we assume there are no ties in the prediction scores, \ie, $c_i \not = c_j$ for $i \not = j$, though this condition will be relaxed later.

Given a threshold $\theta$, negative examples with prediction scores greater than $\theta$ are false positives, and the false positive rate is given by $\tau_{FP} = |\mathbf{I}_{c_i > \theta \land i \in \I^- }|/N^-$. Correspondingly, positive examples with prediction scores greater or equal to $\theta$ are true positives, and the true positive rate is given by $\tau_{TP} = |\mathbf{I}_{c_i \ge \theta \land i \in \I^+ }|/N^+$. Then the receiver operation curve (ROC) is defined as the curve formed by the pair $(\tau_{FP},\tau_{TP})$ with $\theta \in (-\infty,\infty)$. With this definition, ROC is a curve confined to $[0,1]\times[0,1]$ and connecting $(0,0)$ to $(1,1)$.  AUC then corresponds to the area enclosed by the ROC curve of the classifier. 

It is more conveniently computed in closed form using the Wilcoxon-Mann-Whitney (WMW) statistic \cite{Hanley:1982dz}, as $A = {1 \over N^+N^-} \sum_{i \in \I^+} \sum_{j \in \I^-} \mathbf{I}_{c_i > c_j}$.  In this work, we use the {\em AUC risk}, which is defined as 
\begin{equation}
\lauc = 1-A =  {1 \over N^+N^-} \sum_{i \in \I^+} \sum_{j \in \I^-} \mathbf{I}_{c_i < c_j}.
\label{e:auc}
\end{equation}
Note that $\lauc$ takes values in $[0, 1]$ and corresponds to the fraction of pairs of positive and negative predictions that are ranked incorrectly, \ie, a positive example with lower prediction score than a negative example, so $\lauc = 0$ indicates perfect classification/ranking. In addition, $\lauc$ is independent of threshold $\theta$, and only concerns with the overall performance of the predictor $f_\w$.  Hence, we aim to learn a prediction function $f_\w$ that minimizes $\lauc$, from which we can choose $\theta$ to construct classifier $c_{\w,\theta}(\x)$. 

%%%%%%%%%%%%%%%%%%%%%%%  introduction %%%%%%%%%%%%%%%%%%%%%%%%%%%%
\section{Related Works}

Most existing works for either batch or online algorithms for AUC optimization (\eg, \cite{P.-Zhao:2011wb,W.-Gao:2013zo}) minimize surrogates to the true AUC-risk, which are usually in the form of convex upper-bounds to the indicator function in Eq.\eqref{e:auc}. Specifically, denoting the prediction scores for $\x_i$ and $\x_j$ as $c_i$ and $c_j$, respectively, widely used surrogate loss include 
\begin{enumerate} \itemsep 0em
\item the hinge function \cite{W.-Gao:2013zo}, $\ell_h(c_i,c_j) = [1- (c_i - c_j)]_+$, where $\left[a\right]_{+} = \max\{0,a\}$ is the hinge function, 
\item the squared hinge loss \cite{ying_etal_nips16,P.-Zhao:2011wb}, $\ell_{sh}(c_i,c_j) = [1-(c_i - c_j)]_+^2$, 
\item and the rank-boost loss \cite{rudin_icml05}, $\ell_e(c_i,c_j) = e^{c_i-c_j}$. 
\end{enumerate}
All these surrogates are nonnegative, monotonic decreasing and satisfy $\ell(c_i,c_j) = 1$ when $c_i = c_j$.   One significant problem with these surrogates is that they all rely on pairwise comparisons between positive and negative training examples, and lead to algorithms with quadratic running time complexity.  For large datasets, such quadratic running time will significantly slow down the training process, and the pairwise comparisons prohibit efficient online learning algorithms for AUC optimization.

One exception is the work of \cite{ying_etal_nips16}, which shows that the squared hinge surrogate of AUC risk, $\ell_{sh}(c_i,c_j)$, affords an equivalent saddle point reformulation. An online stochastic gradient descent method is then developed based on this reformulation that has complexities $O(N)$ in time and $O(1)$ in space. However, there are two issues of this method that this work aims to improve on.  First, the original surrogate loss still requires pairwise comparison, and to decouple them, one needs to introduce auxiliary variables for a saddle point reformulation. In contrast, our surrogate loss obviates pairwise comparison all together. Second, our surrogate loss reduces to a minimization problem, which is easier to analyze and implement than the saddle point reformulation of  \cite{ying_etal_nips16}.

In parallel with methods directly optimizing AUC, empirical observations suggest that learning objectives not designed for AUC optimization (\eg, SVM or boosting) can achieve low AUC risk \cite{Brefeld:2005dd,Joachims:2005rv,Caruana:2004nr,Rakotomamonjy:2004oc}. For instance, in \cite{Joachims:2005rv}, a generalized SVM approach was developed that is able to optimize multivariate non-linear performance measures in polynomial time, including AUC. However, when assessed with respect to the AUC, the superiority of the direct AUC optimization approach over standard SVMs seemed less convincing. The work of \cite{Caruana:2004nr} many performance measures for binary classification are compared experimentally, and it was found that maximum margin methods such as boosting and SVMs yield excellent performance when measured with AUC. In \cite{Brefeld:2005dd} it was shown that optimizing standard SVMs leads to maximizing the AUC in the special (trivial) case when the given data is separable. As a perfect separation implies a zero AUC risk. The work \cite{steck_ecml07} uses the rank-equivalent definition of AUC to derive a hinge rank loss and shows that it is analogous to the SVM objective. However, no explicit relation between the SVM objective and AUC or AUC surrogates are established in previous works.

Further along this line, several studies have provided results on the consistency of the univariate losses to AUC risk, \ie, in the asymptotic sense, minimizing the univariate losses under certain conditions may also lead to the minimization of AUC risk \cite{clemencon_11,agarwal_13}, and a similar analysis is conducted for binary surrogate losses to AUC risk in \cite{gao_14}. These analyses show that univariate losses such as the $\ell_2$, squared hinge and exponential losses are consistent with AUC risk, yet the widely used hinge loss in SVM are inconsistent. This seems to put in question whether minimizing AUC risk based on pairwise comparisons is really warranted. However, these studies are still of limited in practice due to several reasons. First, they can not explain the observation that the SVM algorithm which is based on the hinge loss, oftentimes leads to good performance when evaluated with AUC risk, though it is not theoretically consistent with AUC risk. In addition, these analysis does not reveal a direct relation between the univariate losses and AUC risk, and it is more illustrative if some bounding relation between them can be revealed. Furthermore, these analyses may not be as relevant in practice, as the learning objective in actual algorithms is usually combined with extra terms such as the regularizers.  

%%%%%%%%%%%%%%%%%%%%%%%  introduction %%%%%%%%%%%%%%%%%%%%%%%%%%%%
\section{Method}

In this section, we start with an equivalent definition of AUC risk, which does not require pairwise comparisons of positive and negative examples. From this equivalent definition, we establish our AUC surrogate loss and its equivalent form for efficient optimization. 

\subsection{AUC Risk Without Pairwise Comparison}

Besides the WMW statistics, Eq.\eqref{e:auc}, there exists another equivalent formulation of AUC risk (and AUC itself), which  depends on the ranking of the prediction scores instead of all pairwise comparisons of the prediction scores of the positive and negative examples \cite{hand_till_ml01,steck_ecml07}.  To explain this equivalent form of AUC risk, we first introduce several additional notations. For simplicity, we assume there are no ties in the prediction scores, \ie, $c_i \not = c_j$ for $i \not = j$, though this condition will be relaxed later. 

We denote $(c^\uparrow_1,\cdots,c^\uparrow_N)$ as the result of sorting $(c_1,\cdots,c_N)$ in ascending order, \ie, $c^\uparrow_1 < c^\uparrow_2 < \cdots < c^\uparrow_N$.  Moreover, let $r^+_i  \in \{1,\cdots,N\}$ ($i = 1, \cdots, N^+$) be the {\em rank} of the i$^\text{th}$ positive example encountered in the ordered list $(c^\uparrow_1,\cdots,c^\uparrow_N)$ starting from the beginning. With a slight abuse of notation, let $c^{\uparrow+}_i$ be the corresponding value of the i$^\text{th}$ positive example in the ordered list $(c^\uparrow_1,\cdots,c^\uparrow_N)$, \ie, $c^{\uparrow+}_i = c^{\uparrow}_{r^+_i}$. An example illustrating these definitions is given in Fig.\ref{fig:1}.
\begin{figure}[t]
\begin{center}
\includegraphics[width=.48\textwidth]{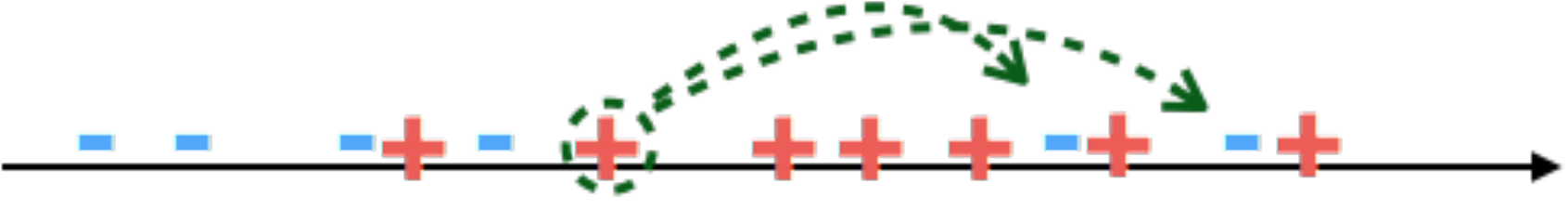}
\end{center}
~\vspace{-3em}
\caption{\small \em An illustration of the ranking definition of the AUC. Note that in this case, we have $N^+ = 7$, $N^- = 6$, and $(r^+_1,r^+_2,r^+_3,r^+_4,r^+_5,r^+_6,r^+_7) = (4,6,7,8,9,11,13)$. For the positive example highlighted with circle, it is the second positive example in the ordered list, and it is outranked by two negative examples (shown by arrows). So its contribution to AUC risk is $N^-+i - r^+_i = 6 + 2 - 6 = 2$. Repeating for all $7$ positive examples the total wrong pairs is $3 + 2 + 2 +2 +2 +1 +0 = 12$ and AUC risk is ${12 \over 6 \times 7} = {2 \over 7}$, which is the same as computed with Eq.\eqref{e:auc}.}
\label{fig:1}
~\vspace{-2em}
\end{figure}
These definitions immediately lead to the following simple result that will be important subsequently.
\begin{lemma} For $i = 1,\cdots, N^+$, we have $$r^+_i \le N^- + i,~~ c^\uparrow_{N^- + i} \ge c^{\uparrow+}_i.$$
\label{lem:1}
~\vspace{-2em} \end{lemma}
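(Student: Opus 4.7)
The plan is to handle the two assertions in sequence, deriving the second one as an immediate consequence of the first combined with the monotonicity of the sorted sequence.

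For the first assertion $r^+_i \le N^- + i$, I would use a straightforward counting argument. By definition, $r^+_1 < r^+_2 < \cdots < r^+_{N^+}$ are precisely the positions of the positive examples within the ascending list $c^\uparrow_1 < \cdots < c^\uparrow_N$. Among the first $r^+_i$ entries of this list, exactly $i$ of them are positive (namely those indexed by $r^+_1, \ldots, r^+_i$), so the remaining $r^+_i - i$ entries must be negative examples. Since there are only $N^-$ negative examples in total, this forces $r^+_i - i \le N^-$, which rearranges to $r^+_i \le N^- + i$.

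For the second assertion, I would simply invoke the strict monotonicity of the sorted scores. Since $c^\uparrow_1 < c^\uparrow_2 < \cdots < c^\uparrow_N$ and $r^+_i \le N^- + i$, we get $c^\uparrow_{r^+_i} \le c^\uparrow_{N^- + i}$. By the definition $c^{\uparrow+}_i = c^\uparrow_{r^+_i}$, this yields $c^{\uparrow+}_i \le c^\uparrow_{N^- + i}$, which is the desired inequality.

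There is no real obstacle here; both claims are essentially pigeonhole-style observations about an ordered list containing two disjoint subpopulations. The only subtlety worth spelling out is that the counting argument genuinely uses the definition that $r^+_i$ is the rank of the $i$-th positive example (so that all previous positive examples occupy strictly earlier positions), together with the fact that the total number of negatives is $N^-$. Both ingredients are already fixed in the setup preceding the lemma, so the argument closes cleanly in a few lines.
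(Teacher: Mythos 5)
Your proof is correct and follows essentially the same route as the paper: a counting argument showing that the $i$-th positive example is preceded by $i$ positives and at most $N^-$ negatives (hence $r^+_i \le N^-+i$), followed by monotonicity of the sorted list for the second inequality. Your version is slightly more explicit about the pigeonhole step, but the substance is identical.
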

Proof of Lemma \ref{lem:1} is provided in the Appendix \ref{sec:app}. 

With these definitions, AUC risk can be defined using the rankings of the predictions \cite{hand_till_ml01}, which is equivalent to the definition based on the WMW statistics as given in Eq.\eqref{e:auc}. The intuition behind this equivalent form is a different way to count the number of reverse ordered pairs of positive and negative examples, which is illustrated with the numerical example in Figure \ref{fig:1}.
\begin{lemma}[\cite{hand_till_ml01}]
When there is no ties in training data, \ie, $c_i \not = c_j$ for $i \not = j$, we have
\begin{equation}
\begin{array}{ll}
\lauc & ={1 \over N^+N^-}  \sum_{i=1}^{N^+} (N^- + i- r^+_i) \\
&= 1 +  {N^+ + 1 \over 2N^-}- {1 \over N^+N^-}  \sum_{i=1}^{N^+}r^+_i.
\end{array}
\label{e:auc1}
\end{equation}
\label{lem:2}
~\vspace{-1em} \end{lemma}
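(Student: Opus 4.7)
The plan is to prove the two equalities in turn by a direct counting argument followed by elementary algebra.

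For the first equality, I would start from the definition in Eq.~\eqref{e:auc}, rewriting it as a sum over positive examples of the number of negative examples that outrank them:
\[
\lauc = \frac{1}{N^+ N^-} \sum_{i=1}^{N^+} \bigl|\{\,j \in \I^- : c_j > c^{\uparrow+}_i\,\}\bigr|.
\]
Since the $i$-th positive example sits at position $r^+_i$ in the sorted list $c^\uparrow_1 < \cdots < c^\uparrow_N$, there are exactly $N - r^+_i$ examples with strictly larger score. Among these, the positive examples are precisely the $(i+1)$-th through $N^+$-th positives, of which there are $N^+ - i$. Therefore the number of negatives outranking the $i$-th positive is
\[
(N - r^+_i) - (N^+ - i) = N^- + i - r^+_i.
\]
Summing over $i = 1,\ldots,N^+$ gives the first equality in \eqref{e:auc1}. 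Lemma~\ref{lem:1} ensures each summand is nonnegative, so the counting is consistent.

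For the second equality, I would simply expand the sum:
\[
\frac{1}{N^+N^-}\sum_{i=1}^{N^+}(N^- + i - r^+_i) = 1 + \frac{1}{N^+N^-}\sum_{i=1}^{N^+} i - \frac{1}{N^+N^-}\sum_{i=1}^{N^+} r^+_i,
\]
and use $\sum_{i=1}^{N^+} i = N^+(N^+ + 1)/2$ to obtain the claimed formula.

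The only nontrivial step is the counting identity $N - r^+_i - (N^+ - i) = N^- + i - r^+_i$, but this is a straightforward consequence of the no-ties assumption together with the definition of $r^+_i$, and can be verified against the numerical example in Fig.~\ref{fig:1}. No real obstacle is expected.
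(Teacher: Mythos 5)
Your proof is correct and follows essentially the same counting argument as the paper: both identify the number of negatives outranking the $i$-th positive as $N^- + i - r^+_i$ (the paper counts the $r^+_i - i$ negatives ranked below and subtracts from $N^-$, while you count the $N - r^+_i$ examples ranked above and remove the $N^+ - i$ positives among them, which is the same computation). Your explicit algebra for the second equality, using $\sum_{i=1}^{N^+} i = N^+(N^++1)/2$, is also correct and fills in a step the paper leaves implicit.
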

Proof of Lemma \ref{lem:2} is provided in the Appendix \ref{sec:app}. Note that $\sum_{i=1}^{N^+} (N^- + i)$ corresponds to (trivially) the sum of the indices of the largest $N^+$ (top-$N^+$) elements in the ranked list of prediction scores, and $\sum_{i=1}^{N^+} r^+_i$ is the sum of the indices of positive examples in the ranked list of predictions. As such, AUC risk as defined in Eq.\eqref{e:auc1} is proportional to the difference between the two sums. 

This gives another intuitive explanation of AUC risk: in a perfect separable case, when the prediction scores of all the positive examples rank higher than those of the negative examples, \ie, all prediction scores of positive examples have ranks $N^- +1, \cdots, N$ in the ordered list, AUC risk is zero. In the more general cases, AUC risk measures how the rankings of the prediction scores deviate from this ideal case.

\subsection{Univariate Bound on AUC risk}

The equivalent form of AUC risk of Eq.\eqref{e:auc1} inspires a new surrogate loss based on the values of the sorted prediction scores $(c^\uparrow_1,\cdots, c^\uparrow_N)$.  To be specific, let us define a new quantity
\begin{equation}
\tilde{L} = {1 \over N^+N^-} \sum_{i = N^-+1}^N c^\uparrow_i  - {1 \over N^+N^-}\sum_{i \in \I^+} c_i.
\label{e:L1}
\end{equation}
Like AUC risk, $\tilde{L}$ is always nonnegative, as the second term, which is the sum of the prediction scores of all the positive examples, is less than or equal to the first term, which is the sum of the top-$N^+$ elements of $(c_1,\cdots, c_N)$. Equality holds only when the predictions of all positive examples rank higher than any of the negative examples. Our next result shows that we can bound AUC risk using $\tilde{L}$.
\begin{theorem} 
When there is no ties in training data, \ie, $c_i \not = c_j$ for $i \not = j$, we have $\tilde{L} \ge 0$.  Furthermore, there exist constants $\bar{\alpha} \ge \underline{\alpha} > 0$, such that $\bar{\alpha}\tilde{L} \ge \lauc \ge \underline{\alpha}\tilde{L}$. 
\label{lem:3}
~\vspace{-1em}
\end{theorem}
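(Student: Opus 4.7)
The plan is to reduce both $\tilde L$ and $\lauc$ to sums indexed by the positive examples, and then compare them term by term using the telescoping structure of consecutive gaps in the sorted prediction scores. First I would rewrite the second sum in \eqref{e:L1}. Since each positive score appears somewhere in the sorted list at one of the positions $r_1^+,\ldots,r_{N^+}^+$, we have
\[
\sum_{i\in\I^+} c_i \;=\; \sum_{i=1}^{N^+} c^{\uparrow+}_i \;=\; \sum_{i=1}^{N^+} c^{\uparrow}_{r_i^+},
\]
and pairing this with the first sum gives
\[
\tilde L \;=\; \frac{1}{N^+N^-}\sum_{i=1}^{N^+}\bigl(c^{\uparrow}_{N^-+i} - c^{\uparrow}_{r_i^+}\bigr).
\]
By Lemma \ref{lem:1}, each summand is non-negative, which immediately proves $\tilde L\ge 0$.

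For the two-sided bound I would compare the identity above to the rank-form of AUC risk in Lemma \ref{lem:2}, $\lauc = \frac{1}{N^+N^-}\sum_{i=1}^{N^+}(N^- + i - r_i^+)$, summand by summand. Introducing the consecutive gaps $\delta_k := c^{\uparrow}_{k+1}-c^{\uparrow}_k$, the no-tie assumption ensures $\delta_k>0$ for every $k$, so both $\delta_{\min}:=\min_k\delta_k$ and $\delta_{\max}:=\max_k\delta_k$ are finite and strictly positive. Telescoping yields
\[
c^{\uparrow}_{N^-+i} - c^{\uparrow}_{r_i^+} \;=\; \sum_{k=r_i^+}^{N^-+i-1}\delta_k,
\]
a sum of exactly $(N^-+i-r_i^+)$ positive terms, each lying in $[\delta_{\min},\delta_{\max}]$. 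Hence
\[
\delta_{\min}\bigl(N^-+i-r_i^+\bigr)\;\le\; c^{\uparrow}_{N^-+i}-c^{\uparrow}_{r_i^+} \;\le\; \delta_{\max}\bigl(N^-+i-r_i^+\bigr).
\]

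Summing over $i$ and dividing by $N^+N^-$ gives the sandwich $\delta_{\min}\,\lauc \le \tilde L \le \delta_{\max}\,\lauc$, so the choices $\underline{\alpha} = 1/\delta_{\max}$ and $\bar{\alpha} = 1/\delta_{\min}$ yield the claimed $\bar{\alpha}\tilde L \ge \lauc \ge \underline{\alpha}\tilde L$ with $\bar{\alpha}\ge\underline{\alpha}>0$. The only real subtlety — and the main obstacle — is recognizing that the no-tie hypothesis is exactly what is needed to keep $\delta_{\min}>0$ and therefore $\bar\alpha$ finite; the theorem promises only existence of the constants, so no uniformity across datasets or predictors is required, and the proof needs nothing beyond Lemma \ref{lem:1} together with the telescoping identity above.
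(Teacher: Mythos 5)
Your proposal is correct and follows essentially the same route as the paper's proof: rewrite $\tilde L$ as $\frac{1}{N^+N^-}\sum_{i=1}^{N^+}(c^\uparrow_{N^-+i}-c^\uparrow_{r_i^+})$, invoke Lemma \ref{lem:1} for nonnegativity, and telescope the consecutive gaps to sandwich each summand between $\delta_{\min}$ and $\delta_{\max}$ times the rank deficit $(N^-+i-r_i^+)$, which matches the paper's choices $\bar\alpha^{-1}=\min_i(c^\uparrow_{i+1}-c^\uparrow_i)$ and $\underline\alpha^{-1}=\max_i(c^\uparrow_{i+1}-c^\uparrow_i)$. No gaps.
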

\begin{proof}
Using Lemma \ref{lem:1}, we have $\sum_{i=1}^{N^+} \lt(c^\uparrow_{N^- + i} - c^{\uparrow+}_i\rt) \ge 0$, therefore $\tilde{L} \ge 0$, and it is zero when $c^\uparrow_{N^- + i} = c^{\uparrow+}_i$ for all $i = 1, \cdots, N^+$, \ie, all positive examples outrank all negative examples.

We set $\bar{\alpha}^{-1} = \min_i (c^\uparrow_{i+1} - c^\uparrow_i) > 0$, and for $i > j$, we have $c^\uparrow_{i} - c^\uparrow_{j} = (c^\uparrow_{i} - c^\uparrow_{i-1}) + (c^\uparrow_{i-1} - c^\uparrow_{i-2}) + \cdots + (c^\uparrow_{j+1} - c^\uparrow_{j}) \ge {i-j \over \bar{\alpha}}$. %
Then we have
~\vspace{-.4em} \[ 
\begin{array}{ll}
\bar{\alpha} \tilde{L} &= {\bar{\alpha} \over N^+N^-}\sum_{i = N^-+1}^N c^\uparrow_i  - {\bar{\alpha} \over N^+N^-}\sum_{i \in \I^+} c_i \\
&= {\bar{\alpha} \over N^+N^-} \sum_{i=1}^{N^+} \lt(c^\uparrow_{N^- + i} - c^{\uparrow+}_i\rt) \\
& = {\bar{\alpha} \over N^+N^-} \sum_{i=1}^{N^+} \lt(c^\uparrow_{N^- + i} - c^{\uparrow}_{r^+_i}\rt) \\
&\ge {1 \over N^+N^-} \sum_{i=1}^{N^+} \lt(N^- + i - r^+_i\rt) = \lauc. 
 \end{array}
\]
Next, setting $\underline{\alpha}^{-1} = \max_i (c^\uparrow_{i+1} - c^\uparrow_i)$, and follow a similar derivation, we can obtain the other bound, \ie, $\lauc \ge \underline{\alpha}\tilde{L}$. The equalities in the bounds hold when $c^\uparrow_{i+1} - c^\uparrow_i$ is a constant for $i = 1,\cdots, N$, \ie, they are equally spaced. 
\end{proof}

\subsection{Computing $\tilde{L}$ without Explicit Ranking}

However, the ranking operation in $\tilde{L}$ is the main obstacle of using Eq.\eqref{e:L1} as a learning objective. However, this can be solved based on the following result on the sum of the top $k$ elements in a set \cite{Ogryczak:2003dl,fan_nips2017}, from which we can derive an equivalent form of Eq.\eqref{e:L1} that does not rely on ranking elements explicitly. 
\begin{lemma}[\cite{Ogryczak:2003dl,fan_nips2017}] 
For $N$ real numbers $z_1 < \ldots < z_N$, we have the equivalence of the sum-of-top-$k$ elements with an optimization problem as
\begin{equation}
\sum_{i=N-k+1}^{N} z_{i} = \min_{\lambda} \left\{ k\lambda + \sum_{i=1}^{N}\left[z_i - \lambda \right]_{+}  \right\},
\label{lt}
\end{equation}
with the optimal $\gl^\star$ satisfying $z_{N-k} \le \gl^\star < z_{N-k+1}$.
\label{lem:4}
~\vspace{-1em} \end{lemma}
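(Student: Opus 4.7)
The plan is to analyze the univariate objective $g(\lambda) = k\lambda + \sum_{i=1}^{N}[z_i - \lambda]_{+}$, show that it is convex and piecewise linear in $\lambda$, identify its minimizing interval via first-order considerations, and then evaluate $g$ on that interval to recover the claimed identity. Since both sides of the equation in Eq.~\eqref{lt} are real numbers and the right-hand side is an infimum of affine functions of $\lambda$ (hence convex), this structure makes the minimization tractable.

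As the first step, I would observe that each hinge term $[z_i - \lambda]_{+}$ is convex in $\lambda$, so $g$ is convex and moreover piecewise linear with break points exactly at $z_1 < \cdots < z_N$. On any open sub-interval between consecutive break points, the derivative is $g'(\lambda) = k - |\{i : z_i > \lambda\}|$. Because the $z_i$ are distinct, this count equals $k$ precisely when $\lambda \in (z_{N-k}, z_{N-k+1})$, is at least $k+1$ when $\lambda < z_{N-k}$, and is at most $k-1$ when $\lambda > z_{N-k+1}$. Consequently $g' \le -1$ to the left of $z_{N-k}$, $g' = 0$ on $(z_{N-k}, z_{N-k+1})$, and $g' \ge 1$ to the right of $z_{N-k+1}$. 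Combined with convexity (handling the kinks at the break points through the subdifferential), this pins down the set of minimizers of $g$ as the closed interval $[z_{N-k}, z_{N-k+1}]$, which contains the half-open range $z_{N-k} \le \lambda^\star < z_{N-k+1}$ stated in the lemma.

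The final step is to plug any such $\lambda^\star$ back into $g$. For $\lambda^\star$ in this range one has $z_i - \lambda^\star > 0$ for $i \ge N-k+1$ and $z_i - \lambda^\star \le 0$ for $i \le N-k$, so $g(\lambda^\star) = k\lambda^\star + \sum_{i=N-k+1}^{N}(z_i - \lambda^\star) = \sum_{i=N-k+1}^{N} z_i$, establishing Eq.~\eqref{lt}.

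The proof is essentially routine once $g$ is recognized as a convex piecewise-linear function; the only step requiring care is the bookkeeping of strict versus weak inequalities at the break points $z_{N-k}$ and $z_{N-k+1}$, which is made clean by the distinctness assumption on the $z_i$. If distinctness were dropped, ties would have to be handled explicitly, but that refinement is not needed for this statement.
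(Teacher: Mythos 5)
Your proof is correct, but it takes a different route from the paper. You analyze $g(\lambda) = k\lambda + \sum_{i=1}^{N}[z_i-\lambda]_+$ directly as a convex piecewise-linear function of $\lambda$, locate the flat piece where $g'(\lambda) = k - |\{i: z_i > \lambda\}| = 0$, conclude that the minimizers form the interval $[z_{N-k}, z_{N-k+1}]$, and evaluate $g$ there. The paper instead starts from the primal linear program $\max_{\mathbf{p}} \mathbf{p}^\top \mathbf{z}$ subject to $\mathbf{p}^\top \mathbf{1} = k$, $p_i \in [0,1]$, whose value is the sum of the top $k$ elements, forms the Lagrangian, and reads off Eq.~\eqref{lt} as the dual problem after eliminating the dual variable $\mathbf{b}$ via $\mathbf{b}^\top\mathbf{1} \ge \sum_i [z_i-\lambda]_+$; it then verifies the claimed range of $\lambda^\star$ by substitution, exactly as you do in your final step. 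Your argument is more elementary and self-contained: it needs no appeal to LP strong duality and, by working with the subdifferential at the kinks, it actually characterizes the \emph{full} set of minimizers as the closed interval $[z_{N-k}, z_{N-k+1}]$ (of which the lemma's half-open range is a subset), something the paper's derivation does not make explicit. The paper's duality route, on the other hand, explains where the variational formula comes from in the first place (it is the LP dual of the natural top-$k$ selection program, following \cite{Ogryczak:2003dl}) and generalizes more readily to other order-statistic functionals. Both proofs are complete for the stated lemma; the only caveat common to both is the implicit assumption $1 \le k \le N-1$ so that $z_{N-k}$ and $z_{N-k+1}$ both exist.
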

Proof of Lemma \ref{lem:4} is provided in the Appendix \ref{sec:app}. Using Lemma \ref{lem:4}, we can rewrite $\tilde{L}$ by as a minimization problem over the auxiliary variable $\gl$, as
~\vspace{-.4em} \[ 
N^+N^-\tilde{L} = \min_{\lambda} \left\{ N^+\lambda + \sum_{i=1}^{N}\left[c_i - \lambda \right]_{+}   \right\} - \sum_{i \in \I^+} c_i,
\]
which can be further converted to
~\vspace{-.4em} \[
\min_{\lambda} \left\{\sum_{i \in \I^+} \lt(\left[c_i - \lambda \right]_{+} - (c_i - \lambda)\rt)+ \sum_{j \in \I^-} \left[c_j - \lambda \right]_{+}\right\}.
\]
Using the property of the hinge function that $[a]_+ - a = [-a]_+$, we can further simplify $\tilde{L}$, as
~\vspace{-.4em} \[ 
\begin{array}{ll}
\tilde{L} &= {1 \over N^+N^-} \min_{\lambda} \left\{  \sum_{i \in \I^+} \left[\lambda - c_i \right]_{+} + \sum_{j \in \I^-} \left[c_j - \lambda \right]_{+} \rt\} \\
&= {1 \over N^+N^-} \min_{\lambda} \sum_{i=1}^{N} \left[y_i(\lambda - c_i))\right]_{+}.
\end{array}
\]
Bringing back the parametric model to form a learning objective based on $\tilde{L}$ as
\begin{equation}
\tilde{L}(\w) = {1 \over N^+N^-} \min_{\lambda} \sum_{i=1}^{N} \left[y_i(\lambda - f_\w(\x_i))\right]_{+}.
\label{e:L2}
\end{equation}
This reformulation of $\tilde{L}$ is still a bound for AUC risk, but it does not require pairwise comparisons between predictions of positive and negative examples, and there is no need to explicitly ranking the predictions. Furthermore, in Eq.\eqref{e:L2}, the auxiliary variable $\lambda$ can be understood as a threshold that separates the two classes, and $\tilde{L}(\w)$ becomes independent of the choice of threshold by taking the overall minimum over all possible values for the threshold, as in the case of the original definition of AUC risk. 

The learning objective $\tilde{L}(\w)$ affords an intuitive interpretation in the context of binary classification.  It only penalizes those positive examples with predictions less than the threshold, \ie, $\left[\lambda - f_\w(\x_i)\right]_{+}$ for $i \in \I^+$, and negative examples with predictions greater than the threshold, \ie, $\left[f_\w(\x_i) - \lambda \right]_{+}$ for $i \in \I^-$. All examples that are on the ``correct'' side of the threshold receive no penalty. According to Lemma \ref{lem:4}, the optimal $\lambda$ takes value in the range of $[c^\uparrow_{N^+}, c^\uparrow_{N^++1})$.

\subsection{Relation with SVM Objective}
There are some strong similarities between $\tilde{L}(\w)$ and the SVM objective, which is particularly striking in the case of linear prediction function $f_\w(\x) = \w^\top\x$. This becomes clearer if we reformulate the SVM objective: if we regard the threshold $\lambda$ as the bias term in the linear prediction function for SVM\footnote{Typically in SVM we define the linear prediction function as $\w^\top\x+b$, but here we flip the sign of the bias so to better compare with $\tilde{L}(\w)$.}, $\w^\top\x - \lambda$, we can formulate the linear SVM objective \cite{cortes1995support} as 
~\vspace{-.4em} \[ 
\tilde{L}_\text{SVM}(\w,\lambda) = \sum_{i=1}^N [1 + y_i(\lambda - \w^\top\x_i)]_+.
\] 
Now comparing with Eq.\eqref{e:L2}, the two objectives has similar forms involving the hinge function.  We can further show that $\tilde{L}_\text{SVM}(\w,\lambda)$ is an upper-bound of $\tilde{L}(\w)$. This is because we have $[1 + y_i(\lambda - \w^\top\x_i)]_+ \ge [y_i(\lambda - \w^\top\x_i)]_+$, so
~\vspace{-.4em} \[ 
\begin{array}{ll}
\tilde{L}_\text{SVM}(\w,\lambda) &= \sum_{i=1}^N [1 + y_i(\lambda - \w^\top\x_i)]_+ \\
&\ge \sum_{i=1}^N [y_i(\lambda - \w^\top\x_i)]_+ \\
&\ge \min_\lambda \sum_{i=1}^N [y_i(\lambda - \w^\top\x_i)]_+ \\
&= \tilde{L}(\w).
\end{array}
\]
As we have shown in Theorem \ref{lem:3}, an upper-bound of AUC risk can be established with $\tilde{L}(\w)$, and this relation suggests the SVM objective $\tilde{L}_\text{SVM}(\w,\lambda)$ is also an upper-bound (albeit looser than $\tilde{L}$) of AUC risk.  

This helps to explain some long standing experimental observations (\eg, \cite{Brefeld:2005dd,Joachims:2005rv,Caruana:2004nr,Rakotomamonjy:2004oc}) that when assessed with AUC, standard SVMs could not be consistently outperformed by other approaches tailored to directly maximize AUC, such as RankBoost \cite{Freund:2003yy}, AUCsplit (local optimization of AUC) \cite{Herbrich:1999gl}, or ROC-SVM \cite{Rakotomamonjy:2004oc}.  

The two learning objectives also differ in two important aspects. The first is the constant $1$ in the SVM objective, which corresponds to the margin in constructing the binary classifier.  The second difference is that the bias $\lambda$ in $\tilde{L}$ is eliminated through minimization, but it is still present in the SVM objective.

\section{Optimization}
\label{s:25}

In this section, we discuss batch and online learning algorithms based on learning objectives formed from Eq.\eqref{e:L2}.

\subsection{Resolving Ties in Prediction Scores}
However, Eq.\eqref{e:L2} cannot be used as a learning objective due to one important issue. Note that in Eq.\eqref{e:L2}, the scale of the parameter $\w$ is not fixed, so the learning objective can be reduced by shrinking the scale of $\w$, which leads to a trivial solution with $\w = 0$. The underlying reason for this is that Eq.\eqref{e:L2} is based on the assumption of no ties in the  prediction scores, while the trivial solution corresponds to the extreme contrary, \ie, the prediction function always produce the same output (zero) regardless of the data. 

To resolve this problem, we augment the objective function with two other terms
\begin{equation}
\min_{\w} \tilde{L}(\w) + {\beta \over 2} \sum_{i=1}^{N} (1-y_i f_\w(\x_i))^2 + \gamma\Omega(\w),
\label{eq:1a}
\end{equation}
where the second term corresponds to a least squares term to counteract the effect of concentrating $\w$ to zero, the third term $\Omega(\w)$ is a regularizer on parameter $\w$, and $(\beta,\gamma)$ are weights to the two extra terms. 

\subsection{Linear Predictor}

In general, the learning objective of Eq.\eqref{eq:1a} is not a convex function of $\w$, but if we choose $f_\w(\x) = \w^\top\x$ and $\Omega(\w)$ is convex with respect to $\w$ (\ie, $\Omega(\w) = {1 \over 2} \|\w\|^2$), then we can show it is a convex function of $\w$. We first show that $[\x^\top\w - \lambda]_+$ is a convex function. For $\alpha \in [0,1]$, $\w$, $\w'$, $\lambda$, and $\gl'$, we have 
\begin{equation}
\begin{array}{ll}
& [\x^\top(\alpha \w + (1-\alpha)\w') - (\alpha \lambda+ (1-\alpha) \lambda')]_+ =\\
& [\alpha (\x^\top\w - \lambda)+  (1-\alpha)(\x^\top\w' - \lambda')]_+ \le \\
& \alpha[\x^\top\w - \lambda]_+ + (1-\alpha)[\x^\top\w' - \lambda']_+.
\end{array}
\label{e:cvx}
\end{equation}
Therefore, $\sum_{i=1}^{N}\left[\x^\top\w - \lambda \right]_{+}+ N^+\gl$ is a convex function jointly for $(\w,\gl)$. As the minimization of one variable in a joint convex function, $\min_\gl \sum_{i=1}^{N}\left[c_i - \lambda \right]_{+}+N^+\gl$ is also a convex function of $\w$. 

In summary, for the linear case, we can obtain the following convex learning objective with regards to $\w$ and $\gl$ jointly,
\begin{equation}
\begin{array}{ll}
&(\w^\star,\gl^\star) \leftarrow \argmin_{\w,\gl} {\gamma \over 2} \|\w\|^2 + \\
&\sum_{i=1}^{N} \lt\{\left[y_i(\lambda - \x_i^\top \w)\right]_{+} + {\beta \over 2} (1-y_i \x_i^\top \w)^2\rt\}
\end{array}
\label{eq:1}
\end{equation}
In the following, we discuss the batch and online optimization of Eq.\eqref{eq:1}, for which the convergence to global minimum is guaranteed.  

\subsubsection{Batch Learning}

In the batch setting, where we have access to all training examples, we can use block coordinate descent algorithm to optimize Eq.\eqref{eq:1}. We initialize $\w$ and $\lambda$, then iterate between
~\vspace{-.4em} \[ 
\begin{array}{ll}
\w^{(t+1)} &\leftarrow \argmin_\w \sum_{i=1}^N \left[y_i(\lambda^{(t)} - \w^\top \x_i)\right]_{+} + \\
& {\beta \over 2} \sum_{i=1}^N (1-y_i \x_i^\top \w)^2 + {\gamma \over 2} \|\w\|^2;\\
\gl^{(t+1)} &\leftarrow {1 \over 2} (c^\uparrow_{N^+} + c^\uparrow_{N^+ +1}),
\end{array}
\]
where $c^\uparrow_{i}$ is the rerank of $\{\x_i^\top \w^{(t+1)}\}_{i=1}^N$ in the ascending order. The $\w$ sub-problem can be converted to a constrained optimization problem as
~\vspace{-.4em} \[ 
\begin{array}{ll}
&\min_{\w,\vec{t}} \sum_{i=1}^N t_i +{\beta \over 2} \sum_{i=1}^N (1-y_i \x_i^\top \w)^2 + {\gamma \over 2} \|\w\|^2;\\
&\text{s.t.}~~ \y_i(\lambda^{(t)} - \w^\top \x_i) \ge t_i, t_i \ge 0.
\end{array}
\] 
This is a quadratic convex optimization problem and can be solved with interior point method when the dimensionality of $\w$ is low to medium. For high dimensional $\w$, the online learning algorithm is more effective as it avoids building the Hessian matrix.  

%can be obtained with the following linear equation
%~\vspace{-.4em} \[ 
%\lt(\gamma I + \beta \sum_{i=1}^N \x_i \x_i^\top\rt) \w = \sum_{i=1}^N \lt( \beta +   \mathbf{I}_{y_i(\lambda^{(t)} - \w^\top \x_i)>0} \rt)y_i \x_i. 
%\]
\subsubsection{Online Learning}

Because Eq.\eqref{eq:1} does not involve pairwise comparison, we can also derive an online learning algorithm based on stochastic gradient descent \cite{bousquet2008tradeoffs,srebro2010stochastic}. The runtime of the online algorithm does not depend on the number of training examples and thus this algorithm is especially suited for large datasets.  Specifically, with initial choice for the value of $\w^{(0)}$, at the $t^\text{th}$ iteration, a single training example $(\x_{i_t}, y_{i_t})$ is chosen at random from the training set and used to estimate a sub-gradient of the objective, and a step with pre-determined step-size is taken in the opposite direction, as
\begin{equation}
\begin{array}{ll}
\w^{(t+1)} & \leftarrow \w^{(t)} - \eta_t \lt((\gamma I + \beta \x_{i_t}^\top\x_{i_t}) \w^{(t)} - \rt. \\
& \lt. (\beta + \mathbf{I}_{y_i(\lambda^{(t)} - \w^\top \x_i)>0}) y_{i_t} \x_{i_t} \rt)\\
\gl^{(t+1)} & \leftarrow \gl^{(t)} - \eta_t y_{i_t} \mathbf{I}_{y_i(\lambda^{(t)} - \w^\top \x_i)>0},
\end{array}
\label{eq:2}
\end{equation}
where we can choose the step-size $\eta_t \sim {1\over \sqrt{t}}$, then the SGD algorithm will converge in $O(1/\epsilon)$ steps to the $\epsilon$-accuracy of the global optimal value of Eq.\eqref{e:L2} \cite{bousquet2008tradeoffs,srebro2010stochastic}. Note that each step of our online iterative algorithm has space and time complexity of $O(d)$ and $O(1)$, and obviates the need to store or buffer data in previous online AUC optimization methods \cite{W.-Gao:2013zo,P.-Zhao:2011wb}.

% We can apply the SGD algorithm to general $f_\w$ that is not necessarily a convex function of $\w$, \eg, when it is a deep neural network. However in that case, we can only guarantee the convergence to a local minimum of the objective function. 

\section{Experiments}

We perform several experiments of learning binary classifiers to evaluate the batch and online algorithms optimizing learning objectives given in Eq.\eqref{eq:1} (subsequently denoted as {\tt ba-UBAUC} and {\tt ol-UBAUC}, respectively), and compare their performance with existing learning algorithms for AUC optimization. 

As in previous works \cite{W.-Gao:2013zo,ying_etal_nips16}, we perform experiments on $12$ benchmark datasets that have been used in previous studies. A summary of the data in these datasets is given in Table \ref{tab:1a}, with the training/testing split obtained from the original dataset. For datasets that are for data with more than $2$ class labels (\ie, {\tt news20} and {\tt sector}), following the convention of previous work \cite{W.-Gao:2013zo,ying_etal_nips16}, we convert them to binary classification problems by randomly partitioning the data into two groups, each with equal number of classes. Then the binary class labels are determined from the group to which the original class label belongs. Following the evaluation protocol of \cite{W.-Gao:2013zo,ying_etal_nips16}, the performance of reported is obtained by averaging the AUC scores on the test set for $25$ models learned from subsets of the same training set, each is chosen as a random $80\%$ of the original training data. 
\begin{table}[t]
  \centering
  \small{\begin{tabular}{|l|r|r|r|}
  \hline
 &train & test & data dim. \\
  \hline
  diabetes &389 & 389 &8 \\
  fourclass &431 & 431 &2 \\
  german &500 & 500 &24  \\
  splice &1,000& 2,175 &60 \\
   usps &7,291 &2,007&256  \\
   a9a &32,561 &16,281 &123 \\
  w8a & 49,749 &14,951& 300\\
    mnist &60,000 &10,000&780 \\
    acoustic &78,823 & 19,705 &50 \\
  ijcnn1 &49,990 &91,701&22  \\
   sector &6,412 &3,207&55,197 \\
    news20 &15,935 &3,993&62,061   \\
  \hline
  \end{tabular}
    }
    ~\vspace{-1em}
  \caption{\em \small Summary of the $12$ benchmark datasets used in our experiments. The training/testing splitting is from the original datasets.}
    \label{tab:1a} 
  ~\vspace{-2.5em}
\end{table}
\begin{table*}[t]
  \centering
  \setlength{\tabcolsep}{1.2pt}
  \footnotesize{
  \begin{tabular}{|c||c|c|c|c|c|c|c|c|}
  \hline
   & ol-UBAUC &ba-UBAUC &SOLAM &OPAUC &OAM$_{\textrm{seq}}$ &OAM$_{\textrm{gra}}$  &B-SVM-OR & SVM\\
  \hline
  diabetes  &.8326$\pm$.0299 &{ .8328}$\pm$.0352 &.8253$\pm$.0314 &.8309$\pm$.0350 &.8264$\pm$.0367 &.8262$\pm$.0338 &.8326$\pm$.0328 &.7821$\pm$.0145 \\
  fourclass &.8301$\pm$.0318 &{ .8310}$\pm$.0296 &.8226$\pm$.0240 &.8310$\pm$.0251 &.8306$\pm$.0247 &.8295$\pm$.0251 &.8305$\pm$.0311 &.7717$\pm$.0294 \\
  german   &.7928$\pm$.0371 & .7933$\pm$.0324 &.7882$\pm$.0243 &.7978$\pm$.0347 &.7747$\pm$.0411 &.7723$\pm$.0358 &{ .7935}$\pm$.0348 & .7641$\pm$.0283 \\
  splice    &.9231$\pm$.0224 &{ .9269}$\pm$.0094 &.9253$\pm$.0097 &.9232$\pm$.0099 &.8594$\pm$.0194 &.8864$\pm$.0166 & .9239$\pm$.0089 & .8439$\pm$.0096 \\
  usps      &.9728$\pm$.0051 &.9730$\pm$.0066 &{ .9766}$\pm$.0032 &.9620$\pm$.0040 &.9310$\pm$.0159 &.9348$\pm$.0122 &.9630$\pm$.0047 &.8930$\pm$.0075\\
  a9a       &.9005$\pm$.0019 &{ .9009}$\pm$.0041 &.9001$\pm$.0042 &.9002$\pm$.0047 &.8420$\pm$.0174 &.8571$\pm$.0173 &.9009$\pm$.0036 &.8213$\pm$.0064 \\
  w8a       &.9673$\pm$.0993 &{ .9695}$\pm$.0079 &.9114$\pm$.0075 &.9633$\pm$.0035 &.9304$\pm$.0074 &.9418$\pm$.0070 &.9495$\pm$.0082 &.8964$\pm$.0029\\
  mnist     &.9327$\pm$.0239 &{ .9340}$\pm$.0024 &.9324$\pm$.0020 &.9242$\pm$.0021 &.8615$\pm$.0087 &.8643$\pm$.0112 &.9340$\pm$.0020 &.8406$\pm$.0072\\
  acoustic  &.8871$\pm$.0035 &{ .8962}$\pm$.0046 &.8898$\pm$.0026 &.8192$\pm$.0032 &.7113$\pm$.0590 &.7711$\pm$.0217 &.8262$\pm$.0032 &.7629$\pm$.0045\\
  %%epsilon   &.9488$\pm$.0012 &.8643$\pm$.0053 &- &.9550$\pm$.0007 &.8816$\pm$.0042 &.8659$\pm$.0176 &.8644$\pm$.0050 \\
  ijcnn1    &.9264$\pm$.0039 &{ .9337}$\pm$.0038 &.9215$\pm$.0045 &.9269$\pm$.0021 &.9209$\pm$.0079 &.9100$\pm$.0092 &.9337$\pm$.0024 &.8793$\pm$.0094\\
  sector    &{ .9845}$\pm$.0033 &- &.9834$\pm$.0023 &.9292$\pm$.0081 &.9163$\pm$.0087 &.9043$\pm$.0100                &- &.8815$\pm$.0062 \\
  news20    &{ .9468}$\pm$.0045 &-&.9467$\pm$.0039 &.8871$\pm$.0083 &.8543$\pm$.0099 &.8346$\pm$.0094                &- &.8431$\pm$.0127\\
  \hline
  \end{tabular}}
  ~\vspace{-0.5em}
    \caption{ \small \em Comparison of the AUC scores (mean$\pm$std.) on test sets of the evaluated datasets.}
      \label{tab:2} 
   ~\vspace{-2em}
\end{table*}

On these datasets, we evaluate and compare UBAUC-based algorithms with four state-of-the-art online and two batch learning algorithms for learning linear binary classifiers that minimizes various pairwise surrogates to the original AUC risk $\lauc$. The hyper parameters $(\beta,\gamma)$ for UBAUC are determined by a grid search on the validation set.  The initial learning rate for the online learning algorithm is also set for different dataset by a grid search. We compare the following algorithms with UBAUC-based algorithms.
\begin{itemize} \itemsep 0em
\item SOLAM  \cite{ying_etal_nips16}, an online AUC optimization algorithm based on a saddle point reformulation of the pairwise $\ell_2$ surrogate loss of AUC risk; 
\item OPAUC \cite{P.-Zhao:2011wb}, an online AUC optimization algorithm that uses the pairwise  $\ell_2$ loss surrogate of the AUC objective function; 
\item OAM \cite{W.-Gao:2013zo}, an online AUC optimization algorithm that uses the pairwise  hinge loss surrogate of the AUC objective function with two variants, one with sequential update (OAMseq) and the other using gradient update (OAMgra);
\item B-SVM-OR \cite{Joachims:2005rv}, a batch learning algorithm using the pairwise hinge loss surrogate of the AUC objective function;
\item UNI-SVM, a linear SVM algorithm implemented using LIBSVM with SMO minimization \cite{CC01a}. 
\end{itemize}
Classification performances measured by the AUC score on the testing dataset of all compared methods for all $12$ benchmark datasets are given in Table \ref{tab:2}. For fair comparison, we implement all algorithms using MATLAB, and following the default parameter settings in the original papers. Note that the simple implementation of the two batch algorithms cannot handle datasets with high dimensional datasets, \ie, {\tt sector} and {\tt news20}, due to the memory requirement. However, for those datasets that it is feasible to run, {\tt ba-UBAUC}, the batch version optimizing the proposed learning objective, performs best.  On the other hand, the results of u{\tt UNI-SVM}, though optimizing a different objective, still achieves reasonable performance when evaluated with AUC. The online algorithm based on the proposed learning objective, {\tt ol-UBAUC}, achieves comparable performance as other state-of-the-art online algorithms based on pairwise surrogate losses to AUC risk, although the improvements of performance on some of the datasets are not conspicuous due to the nature of the data. 

On the other hand, the main advantage of {\tt ol-UBAUC} in comparison with other online algorithms is the running efficiency -- its per-iteration running time and space complexity is linear in data dimension and do not depend on the iteration number. Furthermore, each iteration of {\tt ol-UBAUC} Eq. \eqref{eq:2} corresponds to a simpler update step than the saddle point solve in SOLAM \cite{ying_etal_nips16}. In Table \ref{tab:3}, we show the per-iteration running time and the total running time for the learning objective function to converge to have smaller than $10^{-7}$ relative changes\footnote{Experiments were performed with running time reported based on a cluster with 12 nodes, each with an Intel Xeon E5-2620 2.0GHz CPU and 64GB RAM. All algorithms are implemented using MATLAB, with available code obtained from the authors of the corresponding publications.} of the five online algorithms we compared. Note that the online version of the UBAUC-based algorithms has more efficient running time with comparable performances in comparison to existing AUC optimization methods.
\begin{table}[t]
\begin{center}
\begin{tabular}{|c|r|r|r|}
\hline
&{\small   {\tt a9a} }&{\small   {\tt usps} }&{\small   {\tt sector}}\\
\hline
{\scriptsize  ol-UBAUC  }&{\scriptsize 0.48}&{\scriptsize   0.15 }&{\scriptsize   11.24}\\
\hline
{\scriptsize  SOLAM }&{\scriptsize 0.50}&{\scriptsize   0.19 }&{\scriptsize   19.90}\\
\hline
{\scriptsize  OPAUC }&{\scriptsize 6.24}&{\scriptsize   4.62}&{\scriptsize   120.30}\\
\hline
{\scriptsize  OAMseq }&{\scriptsize 34.31}&{\scriptsize   13.98}&{\scriptsize   1350.41}\\
\hline
{\scriptsize  OAMgra }&{\scriptsize 34.35}&{\scriptsize   12.54}&{\scriptsize   1350.50}\\
\hline
\end{tabular}
\begin{tabular}{|c|r|r|r|}
\hline
&{\small   {\tt a9a} }&{\small   {\tt usps} }&{\small   {\tt sector}}\\
\hline
{\scriptsize  ol-UBAUC  }&{\scriptsize 0.83 }&{\scriptsize   0.15/0.58 }&{\scriptsize  276.41}\\
\hline
{\scriptsize  SOLAM }&{\scriptsize 0.99 }&{\scriptsize   0.19/0.81 }&{\scriptsize   721.52}\\
\hline
{\scriptsize  OPAUC }&{\scriptsize 14.21 }&{\scriptsize   4.62/11.23}&{\scriptsize   5540.24}\\
\hline
{\scriptsize  OAMseq }&{\scriptsize 78.42 }&{\scriptsize   13.98/32.71}&{\scriptsize   6730.75}\\
\hline
{\scriptsize  OAMgra }&{\scriptsize 69.23 }&{\scriptsize   12.54/39.54}&{\scriptsize   6324.64}\\
\hline
\end{tabular}
\end{center}
   ~\vspace{-1.5em}
\caption{\em \small {\bf (top)}The average running time (in seconds) per pass over training data  for each online algorithm,  and {\bf (bottom)} the average running time (in seconds) for the learning objective function to converge to have smaller than $10^{-7}$ relative changes for each online algorithm.}
\label{tab:3}
   ~\vspace{-3em}
\end{table}%

%%%%%%%%%%%%%%%%%%%%%%%%
\section{Population Form}

So far, we have described the proposed learning objective over a set of finite training data. In this section, we discuss the population form of the surrogate loss using probability distributions of data. This analysis will shed light on the formal connection of the new surrogate loss with existing methods and can lead to deeper theoretical studies.

We start with the population form of the equivalent definition of AUC risk in Eq.\eqref{e:auc1}. We assume that the input data and label are from a joint model $p(\x,y)$, which induces density models for the predictions $c = f(\x)$. As such, we denote $\rho^+(c) = p(c|y=1)$ and $\rho^-(c) = p(c|y=-1)$ as the (conditional) probability density functions (PDFs) for positive and negative class, respectively. For simplicity, we assume both PDFs have infinite support, \ie, is non-zero for the whole $\R$. Also, we denote $p = Pr(y=1)$ as the class prior probability. 

The joint probability density function of the classification output $c$ is then given by $\rho(c) = p \rho^+(c) + (1-p) \rho^-(c)$. We also denote $F^+(c) = \int_{-\infty}^c \rho^+(c')dc'$, $F^-(c) = \int_{-\infty}^c \rho^-(c')dc'$, and $F(c) = \int_{-\infty}^c \rho(c')dc'$ as the cumulative distribution functions (CDFs) for $\rho^+$, $\rho^-$ and $\rho$, respectively, with $F(c) = pF^+(c) + (1-p)F^-(c)$. $F^+(c)$ is the false negative rate (FNR) and $1-F^-(c)$ is the false positive rate (FPR). 

AUC risk is defined as the area under the whole curve of FNR vs. FPR, as $\lauc = \int_{-\infty}^{\infty} (1-F^-(c)) dF^+(c)$ \cite{hand_till_ml01}. Using relation $F^-(c) = {1 \over 1-p}(F(c) - pF^+(c))$ yields
~\vspace{-.4em} \[ 
\lauc = {1 \over 1-p} \int_{-\infty}^{\infty} (1-p + pF^+(c) - F(c)) dF^+(c).
\]
Because $F$ is a CDF is a continuous monotonic function and $F(c) \le 1-p + pF^+(c) \le 1$, using the mean value theorem, there exists $c' \ge c_0 = \max\{c|F(c) = 1-p\}$, such that $1-p = F(c_0) \le F(c') = 1-p + pF^+(c) \le 1$, and
~\vspace{-.4em} \[ 
\lauc = {1 \over 1-p} \int_{-\infty}^{\infty} (F(c') - F(c)) dF^+(c).
\]
Next, note that $F(c)$ is Lipschitz with constant $\alpha' \ge \max_c |\rho(c)|$, \ie, $|F(c') - F(c)| \le \alpha' |c'-c|$, we have
\begin{equation}
\lauc \le {\alpha' \over 1-p} \int_{-\infty}^{\infty} (c' - c) dF^+(c). 
\label{e:aux}
\end{equation}
Next, we use the following result 
\begin{lemma}
For $F(c') = 1-p + pF^+(c)$, we have 
~\vspace{-.4em} \[ 
\int_{-\infty}^{\infty}c' dF^+(c) = \min_\lambda  \int_{-\infty}^{\infty} {(c-\lambda)_+ \over p} dF(c) + \lambda.
\] 
\label{lem:4a}
~\vspace{-1em} \end{lemma}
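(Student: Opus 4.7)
The plan is to recognize that this lemma is the continuous/population analogue of Lemma~\ref{lem:4} (the sum-of-top-$k$ identity), once both sides are rewritten in terms of the quantile function $F^{-1}$. So the strategy has two distinct steps: (i) rewrite the LHS as an integral of $F^{-1}$ over the upper tail $[1-p,1]$ via change of variables, and (ii) rewrite the RHS as the same quantity by differentiating in $\lambda$.

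For step (i), I would substitute $u = F^+(c)$ so that $du = dF^+(c)$ and $c' = F^{-1}(1-p+pu)$ (using the defining relation $F(c') = 1-p+pF^+(c)$ and the fact that $F$ is strictly increasing on $\mathbb{R}$ by the infinite-support assumption on $\rho^{\pm}$). This gives
\[
\int_{-\infty}^{\infty} c'\,dF^+(c) \;=\; \int_0^1 F^{-1}(1-p+pu)\,du.
\]
A second substitution $v = 1-p+pu$, $dv = p\,du$, yields
\[
\int_{-\infty}^{\infty} c'\,dF^+(c) \;=\; \frac{1}{p}\int_{1-p}^{1} F^{-1}(v)\,dv,
\]
which is the population version of the sum of the top $p$-fraction of the $F$-distribution.

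For step (ii), define $g(\lambda) = p\lambda + \int_{-\infty}^{\infty}(c-\lambda)_+\,dF(c)$, so the RHS equals $\tfrac{1}{p}\min_\lambda g(\lambda)$. Differentiating under the integral gives $g'(\lambda) = p - (1-F(\lambda))$, which vanishes precisely at $\lambda^\star = F^{-1}(1-p) = c_0$; convexity of $g$ (a sum of a linear term and convex hinge integrals) makes this the global minimum. Plugging in and using $1-F(\lambda^\star) = p$, the hinge integral reduces to $\int_{\lambda^\star}^{\infty}(c-\lambda^\star)\,dF(c)$ and the $p\lambda^\star$ terms cancel, leaving
\[
g(\lambda^\star) \;=\; \int_{\lambda^\star}^{\infty} c\,dF(c) \;=\; \int_{1-p}^{1} F^{-1}(v)\,dv,
\]
where the last equality is the change of variables $v=F(c)$. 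Dividing by $p$ matches the expression derived in step (i), proving the identity.

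The only delicate point is ensuring the change of variables and the minimizer argument are clean. Because $\rho$ and $\rho^+$ are assumed to have full support on $\mathbb{R}$, both $F$ and $F^+$ are continuous and strictly increasing, so $F^{-1}$ is well-defined, the interchange of derivative and integral in computing $g'$ is justified by dominated convergence, and there are no atom issues at $\lambda^\star$. If one wanted to drop the full-support assumption, the subdifferential $\partial g(\lambda) = [p-(1-F(\lambda^-)),\, p-(1-F(\lambda))]$ contains $0$ at $\lambda^\star = \inf\{\lambda : F(\lambda) \ge 1-p\}$, and the same computation goes through — this is the main (but minor) obstacle to a fully general statement.
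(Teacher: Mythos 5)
Your proposal is correct and follows essentially the same route as the paper's proof: the change of variables $dF(c') = p\,dF^+(c)$ to turn the left side into $\tfrac{1}{p}\int_{c_0}^{\infty} c'\,dF(c')$ (which you write equivalently in quantile form as $\tfrac{1}{p}\int_{1-p}^{1}F^{-1}(v)\,dv$), followed by the first-order condition $1-F(\lambda)=p$ identifying the minimizer $\lambda^\star=c_0$ and substitution back to match the two sides. Your added remarks on convexity of $g$ and the subdifferential in the non-full-support case are sound refinements of the paper's argument rather than a different proof.
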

Proof of Lemma \ref{lem:4a} is provided in the Appendix \ref{sec:app}. Using Lemma \ref{lem:4a}, we can rewrite the integral of the right hand side of Eq.\eqref{e:aux} as
~\vspace{-.4em} \[ 
\min_\lambda  \int_{-\infty}^{\infty} {(c-\lambda)_+ \over p} dF(c) + \lambda - \int_{-\infty}^{\infty}c dF^+(c),
\]
where the terms being minimized can be further simplified as
~\vspace{-.4em} \[ 
\int_{-\infty}^{\infty} {(c-\lambda)_+ \over p} dF(c) + (\lambda - c) dF^+(c).
\]
This can be further expanded using the relation $dF(c) = (1-p) dF^-(c) + pdF^+(c)$ to have
~\vspace{-.4em} \[ 
\begin{array}{ll}
&\int_{-\infty}^{\infty} (c-\lambda)_+ (1-p) dF^{-}(c) + \\
&\int_{-\infty}^{\infty} \left[(\lambda - c) + (c-\lambda)_+\right] p dF^+(c).
\end{array}
\]
Putting all terms together and using the relation $(c-\lambda)_++(\lambda-c) = (\lambda-c)_+$ we have
~\vspace{-.4em} \begin{equation}
\lauc \le {\alpha' \over p(1-p)} \min_\lambda E_{c,y} [y(c-\lambda)]_+,
\label{e:L3}
\end{equation}
where $E_{c,y}$ represents the expectation over $c$ and $y$.

\section{Conclusion}

In this work, we describe a new surrogate loss to the AUC metric based on a formulation of AUC, which does not require pairwise comparison but rankings of the prediction scores. We further show that the ranking operation can be avoided and the learning objective obtained based on this surrogate affords complexity in time and storage that is linear in the number of training data. We perform experiments to demonstrate the effectiveness of the online and batch algorithms for AUC optimization based on the proposed surrogate. 

There are several directions we would like to further explore for this work. First, form the theoretical point of view, we would like to establish the consistency of the proposed learning objective with regards to AUC risk, \ie, the question if the surrogate loss will also lead to the convergence to the optimal AUC risk. The form of our surrogate loss (Eq.\eqref{e:L2}) as an optimization problems makes it difficult to apply the techniques used in previous works \cite{clemencon_11,agarwal_13} to this case. We would also like to establish the generalization error between the data form of the loss Eq.\eqref{e:L2} and its population form counterpart Eq.\eqref{e:L3}. From the algorithm perspective, we would like to extend this learning objective to substitute multi-class AUC \cite{hand_till_ml01}, where multi-class AUC risk is computed as the average of binary class AUC between each pairs of classes. Last, we are interested in applying the online algorithm based on the proposed surrogate loss to non-convex learning objectives such as those used for training deep neural networks. 

\textbf{Acknowledgement}. Siwei Lyu is supported by the National Science Foundation (NSF, Grant IIS-1537257) and Yiming Ying is supported by the Simons Foundation (\#422504) and the 2016-2017 Presidential Innovation Fund for Research and Scholarship (PIFRS) program from SUNY Albany.

\appendix
\section{Appendix: Proofs}
\label{sec:app}

\begin{proof}[Proof of Lemma \ref{lem:1}]
Being the i$^\text{th}$ positive example encountered in the ordered list $(c^\uparrow_1,\cdots,c^\uparrow_N)$, $c^{\uparrow+}_i$ can outrank no more than $N^- + i$ elements in the list, \ie, $i$ positive examples and at most $N^-$ negative examples.  Therefore, we have $r^+_i \le N^- + i$. By the ranking order we also have $c^\uparrow_{N^- + i} \ge c^{\uparrow}_{r^+_i} = c^{\uparrow+}_i$.
\end{proof}

\begin{proof}[Proof of Lemma \ref{lem:2}]
Consider the $i^\text{th}$ positive example encountered in $(c^\uparrow_1,\cdots,c^\uparrow_N)$ starting from the beginning, which has rank $r^+_i$. The number of negative examples that rank lower than it is $r^+_i-i$, \ie, there will be $N^- - (r^+_i - i) = N^- + i- r^+_i$ negative examples with ranks higher than this positive example, \ie, forming a reversed ordered pair with it. This corresponds to the sum over reversed ordered pairs in the definition of AUC risks of Eq.\eqref{e:auc}. Summing over all such reverse ordered pairs divided by the number of all such positive-negative pairs ($N^+N^-$) proves the result.
\end{proof}

\begin{proof}[Proof of Lemma \ref{lem:4}]
First, we note that $\sum_{i=N-k+1}^{N} z_{i}$ is the solution of the following linear programming problem
\begin{equation}
\max_{\bp \in R^{n\times 1}} \ \bp^\top \vec{z}, \ \ \st \ \ \bp^\top \bone = k, \, p_i \in [0,1],
\label{prime_0}
\end{equation}
We form its Lagrangian as 
\begin{equation}
L = -\bp^\top \bz - \mathbf{a}^\top \bp + \mathbf{b}^\top(\bp - \bone) + \lambda (\bp^\top \bone - k),
\label{lagrange}
\end{equation}
where $\mathbf{a} \geq \mathbf{0}$, $\mathbf{b} \geq \mathbf{0}$ and $\lambda$ are Lagrangian multipliers. Setting the derivative of $L$ with respect to $\bp$ to be $\bzero$, we obtain
$\mathbf{a} = \mathbf{b} -\vec{z} +\lambda \bone$. Substituting this into Eq \eqref{lagrange}
, we get the dual problem of (\ref{prime_0}) as
\begin{equation}
\min_{\mathbf{b},\lambda} \ \mathbf{b}^\top \vec{1} + k\lambda, \ \st \ \mathbf{b} \geq \bzero, \mathbf{b} + \lambda \bone - \bz \geq \bzero,
\label{dual0}
\end{equation}
The constraints of Eq. \eqref{dual0} suggest that we should have $\mathbf{b}^\top \vec{1} \ge \sum_{i=1}^{n}\left[z_i - \lambda \right]_{+}$. As such, the objective function achieves its minimum when the equality holds. Reorganizing terms leads to Eq.\eqref{lt}. Further, when we choose $\gl^\star$ satisfying $z_{N-k} \le \gl^\star < z_{N-k+1}$, we have $k\lambda^\star + \sum_{i=1}^{N}\left[z_i - \lambda^\star \right]_{+} = k\lambda^\star + \sum_{i=N-k+1}^{N}(z_i - \lambda^\star) = \sum_{i=N-k+1}^{N} z_{i}$. Thus proves the lemma.   
\end{proof}

\begin{proof}[Proof of Lemma \ref{lem:4a}]
First, we have $dF(c') = pdF^+(c)$, then $\int_{-\infty}^{\infty}c' dF^+(c) = {1 \over p} \int_{c_0}^{\infty}c' dF(c')$, where the lower limit of the integral, $c_0 = \max\{c|F(c) = 1-p\}$, originates from the range of value $c'$. Next, we compute $\min_\lambda  \int_{-\infty}^{\infty} (c-\lambda)_+ dF(c) + p \lambda = \min_\lambda  \int_{\lambda}^{\infty} c dF(c) - \lambda \int_{\lambda}^{\infty} dF(c) + p \lambda$. Differentiating the inner terms with regards to $\lambda$,  we obtain  $\int_{\lambda}^{\infty} dF(c) = p$, or $\int_{-\infty}^{\lambda} dF(c) = 1-p$, so we have at optimum, $\lambda = c_0$. Therefore we have $\min_\lambda  \int_{-\infty}^{\infty} (c-\lambda)_+ dF(c) + p \lambda = \int_{c_0}^{\infty}c' dF(c')$. Further rearranging terms proves the result.
\end{proof}
%%%%%%%%%%%%%%%%%%%%%%%  introduction %%%%%%%%%%%%%%%%%%%%%%%%%%%%
{
\bibliography{refs}
\bibliographystyle{ieeetr}
}

\end{document}